\newcommand{\R}{\mathbb{R}}
\newcommand{\des}{\delta}
\newcommand{\eop}{\hfill$\square$}
\newtheorem{prop}[equation]{Proposition}
\theoremstyle{plain}
\newtheorem{defn}[equation]{Definition}
\begin{document}
\frontmatter          

\title{Geometric tree kernels: Classification of COPD from airway tree geometry
}
\titlerunning{Geometric tree kernels}  
%
\author{Aasa Feragen$^{1,2}$, Jens Petersen$^1$, Dominik Grimm$^2$, Asger Dirksen$^4$, Jesper Holst Pedersen$^5$, Karsten Borgwardt$^{2,3}$ and Marleen de Bruijne$^{1,6}$}
\authorrunning{Feragen, Petersen, Grimm, Dirksen, Pedersen, Borgwardt and de Bruijne} 
%
%
\institute{Department of Computer Science, University of Copenhagen, Denmark$^1$, Max Planck Institute for Intelligent Systems and Max Planck Institute for Developmental Biology, T\"{u}bingen, Germany$^2$, Zentrum f\"{u}r Bioinformatik, Eberhard Karls Universit\"{a}t T\"{u}bingen, Germany$^3$ Lungemedicinsk Afdeling, Gentofte Hospital, Denmark$^4$, Department of Cardiothoracic Surgery, Rigshospitalet, Denmark$^5$, Erasmus MC - University Medical Center Rotterdam, The Netherlands$^6$\\
\email{\{aasa,phup,marleen\}@diku.dk,\\ \{aasa.feragen, dominik.grimm, karsten.borgwardt\}@tuebingen.mpg.de},\\ WWW home page: 
\texttt{http://www.image.diku.dk/aasa}
}

\maketitle              

\begin{abstract}
{\it Methodological contributions:} This paper introduces a family of kernels for analyzing (anatomical) trees endowed with vector valued measurements made along the tree. While state-of-the-art graph and tree kernels use combinatorial tree/graph structure with discrete node and edge labels, the kernels presented in this paper can include geometric information such as branch shape, branch radius or other vector valued properties. In addition to being flexible in their ability to model different types of attributes, the presented kernels are computationally efficient and some of them can easily be computed for large datasets ($N \sim 10.000$) of trees with $30-600$ branches. Combining the kernels with standard machine learning tools enables us to analyze the relation between disease and anatomical tree structure and geometry. {\it Experimental results:} The kernels are used to compare airway trees segmented from low-dose CT, endowed with branch shape descriptors and airway wall area percentage measurements made along the tree. Using kernelized hypothesis testing we show that the geometric airway trees are significantly differently distributed in patients with Chronic Obstructive Pulmonary Disease (COPD) than in healthy individuals. The geometric tree kernels also give a significant increase in the classification accuracy of COPD from geometric tree structure endowed with airway wall thickness measurements in comparison with state-of-the-art methods, giving further insight into the relationship between airway wall thickness and COPD. {\it Software:} Software for computing kernels and statistical tests is available at \url{http://image.diku.dk/aasa/software.php}.
\end{abstract}

\section{Introduction}

Anatomical trees like blood vessels, dendrites or airways, carry information about the organs they are part of, and if we can meaningfully compare anatomical trees and measurements made along them, then we can learn more about aspects of disease related to the anatomical trees~\cite{hasegawa,washko}. For example, airway wall thickness is known to be a biomarker for Chronic Obstructive Pulmonary Disease (COPD), and in order to compare airway wall thickness measurements in different patients, a typical approach is to compare average airway wall area percentage measurements for given airway tree generations of particular subtrees~\cite{hasegawa,hackx}. These approaches assume that measurements made in different locations of the lung are comparable on a common scale, which is not always the case~\cite{hasegawa,hackx}. If we can compare tree structures attributed with measurements made along them in a way which respects the structure and geometry of the tree, then we can be more robustly compare measurements whose values are location sensitive. In this paper we present a family of kernels for comparing anatomical trees endowed with vector attributes, and use these to get a more detailed understanding of how COPD correlates with airway structure and geometry.

\vspace{2mm}

\noindent {\bf Related work.} Several approaches to statistics on attributed (geometric) trees have recently appeared, and some of them were applied to airway trees~\cite{feragen_iccv11,feragen_miccai,nye,sturmmean}. These methods only consider branch length or shape and do not allow for using additional measurements along the airway tree, such as branch radius or airway wall area percentage. Moreover, these methods are computationally expensive~\cite{feragen_sspr}, or need a set of leaf labels~\cite{nye,feragen_miccai}, making them less applicable for general trees. S\o{}rensen~\cite{lauge} treats the airway tree as a set of attributed branches which are matched and then compared using a dissimilarity embedding  combined with a $k$-NN classifier. The matching introduces an additional computational cost and makes the approach vulnerable to incorrect matches.


\emph{Kernels} are a family of similarity measures equivalent to inner products between data points implicitly embedded in a Hilbert space. Kernels are typically designed to be computationally fast while discriminative for a given problem, and often give nonlinear similarity measures in the original data space. Using the Hilbert space, many Euclidean data analysis methods are extended to kernels, such as classification~\cite{libsvm} or hypothesis testing~\cite{gretton}. Kernels are popular because they give computational speed, modeling flexibility and access to linear data analysis tools for data with nonlinear behavior.

There are kernels available for structured data such as strings~\cite{rational,leslie}, trees~\cite{vis_smola}, graphs~\cite{randomwalk,weisfeiler_lehman,karsten_icdm} and point clouds~\cite{bach}. The current state-of-the-art graph kernel in terms of scalability is the Weisfeiler-Lehman (WL)~\cite{weisfeiler_lehman} kernel, which compares graphs by counting isomorphic labeled subtrees of a particular type and ``radius'' $h$. The WL scales linearly in $h$ and the number of edges, but the scalability depends on algorithmic constructions for finite node label sets. Thus, the WL kernel, like most fast kernels developed in natural language processing and bioinformatics~\cite{leslie,vis_smola,rational}, does not generalize to vector-valued branch attributes.

Walk- and path based kernels~\cite{randomwalk,bach,karsten_icdm}, which reduce to comparing sub-walks or -paths of the graphs, are state-of-the-art among kernels which include continuous-valued graph attributes. Random walk-type kernels~\cite{randomwalk,bach} suffer from several problems including tottering~\cite{mahe} and high computational cost. The shortest path kernel~\cite{karsten_icdm} by default only considers path length, and some of the kernels developed in this paper can be viewed as extensions of the shortest path kernel.

\vspace{2mm}

\noindent {\bf Contributions.} We develop a family of kernels which are computationally fast enough to run on large datasets, and can incorporate any vectorial attributes on nodes\footnote{Our formulation allows both node and edge attributes, as edge attributes are equivalent to node attributes on rooted trees: assign each edge attribute to its child node.}, e.g., shape or airway wall measurements. Using the kernels in classification and hypothesis testing experiments, we show that classification of COPD can be substantially improved by taking geometry into account. This illustrates, in particular, that airway wall area percentage measurements made at different locations in the airway tree are not comparable on a common scale. 

We compare the developed kernels to state-of-the-art methods. We see, in particular, that COPD can also be detected from combinatorial airway tree structure using state-of-the-art kernels on tree structure alone, but we show that these contain no more information than a branch count kernel, as opposed to the geometric tree kernels.

\section{Geometric trees and geometric tree kernels}

Anatomical trees like airways are \emph{geometric trees}: they consist of both combinatorial tree structure and branch geometry (e.g., branch length or shape), where continuous changes in the branch geometry can lead to continuous transitions in the combinatorial tree structure. In addition to its geometric embedding, a geometric tree can be adorned with additional features measured along the tree, e.g., airway branch radius, airway wall thickness, airway wall thickness/branch radius, airway wall area percentage in an airway cross section, etc. 

\begin{defn}
A \emph{geometric tree} is a pair $(T, x)$ where $T = (V, E, r)$ is a combinatorial tree with nodes $V$, root $r$ and edges $E \subset V \times V$, and $x \colon V \to \R^n$ is an assignment of (geometric) attributes from a vector space $\R^n$ to the nodes of $T$, e.g. $3D$ position or landmark points. An \emph{attributed geometric tree} is a triple $(T, x, a)$ where $(T, x)$ is a geometric tree and $a \colon x(T) \to \R^d$ is a map assigning a vector valued attribute $a(p) \in \R^d$ to each point $p \in x(T)$.
\end{defn}

A common strategy for defining kernels on structured data such as trees, graphs or strings is based on combining kernels on sub-structures such as strings, walks, paths, subtrees or subgraphs~\cite{rational,leslie,vis_smola,randomwalk,weisfeiler_lehman,karsten_icdm,bach}. These are all instances of the so-called \emph{R-convolution kernels} by Haussler~\cite{haussler}. We shall use \emph{paths} in trees as building blocks for defining kernels on trees.

Let $(T, x)$ be a geometric tree. Given vertices $v_i, v_j \in V$ there is a unique path $\pi_{ij}$ from $v_i$ to $v_j$ in the tree, defined by the sequence of visited nodes:
\[
\pi_{ij} = \left[v_i, p^{(1)}(v_i), p^{(2)}(v_i), \ldots, w, \ldots, p^{(2)}(v_j), p^{(1)}(v_j), v_j\right],
\]
where $p^{(0)}(v) = v$, $p^{(1)}(v) = p(v)$ is the parent node of $v$, more generally $p^{(k)}(v) = p(p^{(k-1)}(v))$, and $w$ is the highest level common ancestor of $v_i$ and $v_j$ in $T$. We call $\pi_{ij}$ the \emph{node-path} from $v_i$ to $v_j$ in $T$ and for each $j$ let the \emph{node-rootpath} $\pi_{jr}$ be  the node-path from $v_j$ to the root.

If the geometric node attributes $x(v) \colon I \to \R^n$ denote embeddings of the edge $(v, p(v))$ into the ambient space $\R^n$, a continuous path $x_{ij} \colon [0, 1] \to \R^n$ can be defined, connecting the embedded nodes $x(v_i), x(v_j) \in \R^n$ along the embedded tree $x(V) \subset \R^n$. We call $x_{ij}$ the \emph{embedded path} from $x_i$ to $x_j$ in $T$.

Throughout the rest of this section, we shall define different kernels for pairs of trees $T_1$ and $T_2$, where $T_i = (V_i, E_i, r_i, x_i, a_i)$ are attributed geometric trees (including non-attributed geometric trees as a special case with $a_i \equiv 1$), $i = 1, 2$. All kernels defined in this section are positive semidefinite, as they are sums of linear and Gaussian kernels composed with known feature maps. This is a necessary condition for a kernel to be equivalent to an inner product in a Hilbert space~\cite{bishop}, needed for the analysis methods used in Sec.~\ref{experiment}.

\subsection{Path-based tree kernels}

\noindent {\bf All-pairs path kernels.} The all-pairs path kernel is a basic path-based tree kernel. Given two geometric trees, it is defined as
\begin{equation} \label{allpaths_kernel}
K_{a} \left( T_1, T_2 \right) = \sum_{ \begin{array}{c}(v_i, v_j) \in V_1 \times V_1,\\ (v_k, v_l) \in V_2 \times V_2\end{array}} k_p(p_{ij}, p_{kl}),
\end{equation}
where $k_p$ is a kernel defined on paths, and $p_{ij}$, $p_{kl}$ are paths connecting $v_i$ to $v_j$ and $v_k$ to $v_l$ in $T_1$ and $T_2$, respectively -- for instance, $\pi_{ij}$ and $\pi_{kl}$, or $x_{ij}$ and $x_{kl}$, as defined above. Note that if the path kernel $k_p$ is a path length kernel, then the all-pairs path kernel is a special case of the shortest path kernel on graphs~\cite{karsten_icdm}.

The kernel $k_p$ should take large values on paths that are geometrically similar, and small values on paths which are not, giving a measure of the alignment of the two tree-paths $p_{ij}$ and $p_{kl}$, making $K_a$ an overall assessment of the similarity between the two geometric trees $T_1$ and $T_2$. The all-pairs path kernel is nice in the sense that it takes every possible choice of paths in the trees into account. It is, however, expensive: The computational cost is $\mathcal{O}(|V|^4) \cdot \mathcal{O}(k_p)$, where $|V| = \max \{ |V_1|, |V_2| \}$ and $\mathcal{O}(k_p)$ is the cost of the path kernel $k_p$. 

\vspace{2mm}

\noindent {\bf Rootpath kernels.} The computational complexity can be reduced by only considering rootpaths, giving a rootpath kernel $K_r$ defined as:
\begin{equation} \label{rootpath_kernel}
K_r \left( T_1, T_2 \right) = \sum_{v_i \in V_1, v_j \in V_2} k_p(p_{ir}, p_{jr})
\end{equation}
where $k_p$ is a path kernel as before, and $p_{ir}$ is the path from $v_i$ to the root $r$. This reduces the computational complexity to $\mathcal{O}(|V|^2)  \mathcal{O}(k_p)$.

\subsection{Path kernels}

The modeling capabilities and computational complexity of the kernels $K_a$ and $K_r$ depend on the choices of path kernel $k_p$ and path representation $p$. 

\vspace{2mm} 
\noindent {\bf Landmark point representation of embedded paths.} From a shape modeling point of view, equidistantly sampled landmark points give a reasonable representation of a path through the tree. Representing paths by $N$ equidistantly sampled landmark points $x_{ij} \in (\R^n)^N$, the path kernel $k_p = k_x$ is either a linear or Gaussian kernel:
\begin{equation} \label{geo_pathwise_kernel}
k_x(x_{ij}, x'_{kl}) = 
\left\{
\begin{array}{ll}
\langle x_{ij} , x'_{kl} \rangle & \textrm{(linear, i.e., dot product)}\\
e^{-\lambda\|x_{ij} - x'_{kl}\|^2_2} & \textrm{(Gaussian)}
\end{array} \right.
\end{equation}
for a scaling parameter $\lambda$ which regulates the width of the Gaussian. 

\vspace{2mm}

\noindent {\bf Node-path kernels.} The landmark point kernels are expensive to compute (see Table~\ref{computational_complexities}). In particular, two embedded tree-paths may have large overlapping segments without having a single overlapping equidistantly sampled landmark point, as the distance between landmark points depends on the length of the entire path. Thus, most landmark points will only appear in one path, giving little opportunity for recursive algorithms or dynamic programming that take advantage of repetitive structure. To enable such approaches, we use node-paths. 

Assume that $\pi^1 = [\pi^1(1), \pi^1(2), \ldots, \pi^1(m)]$ and $\pi^2 = [\pi^2(1), \pi^2(2), \ldots, \pi^2(l)]$ are node-paths in $T_1, T_2$, respectively, as defined above, that is, sequences of consecutive nodes $\pi^i(j) \in V_i$ in $T_i$. We define
\begin{equation} \label{nodepath}
k_\pi(\pi^1, \pi^2) = \left\{ 
\begin{array}{ll}
\sum_{i = 1}^L k_n\left( x_1(\pi^1(i)), x_2(\pi^2(i)) \right) &  \textrm{ if } |\pi^1| = |\pi^2| = L\\
0 & \textrm{ otherwise}
\end{array}
\right.
\end{equation}
where $k_n$ is a node kernel. In this paper $k_n(v_1, v_2)$ is either a linear kernel without/with additional attributes $a_i$:
\[
\langle x_1(v_1) , x_2(v_2) \rangle, \quad \langle x_1(v_1) , x_2(v_2)\rangle \langle a_1(v_1) , a_2(v_2) \rangle
\]
or a Gaussian kernel with/without attributes $a_i$
\[
e^{- \lambda_1 \|x_1(v_1) - x_2(v_2)\|^2}, \quad e^{- \lambda_1\|x_1(v_1) - x_2(v_2)\|^2} \cdot e^{- \lambda_2 \|a_1(v_1) - a_2(v_2) \|^2},
\]
where the Gaussian weight parameters are heuristically set to the inverse dimension of the relevant feature space, i.e., $\lambda_1 = \frac{1}{n}$ and $\lambda_2 = \frac{1}{d}$.

Now the node-rootpath tree kernel $K_r$ can be rewritten as: 
\begin{equation} \label{node_decomp_1}
K_r(T_1, T_2) = \sum_{l=1}^h \sum_{v_1 \in V_1^l} \sum_{v_2 \in V_2^l} \sum_{i=1}^l k_n \left( x_1(\pi_1(i)), x_2(\pi_2(i)) \right),
\end{equation}
where $h = \min \{\textrm{height}(T_i), i = 1,2\}$. This can be reformulated as a weighted sum of node kernels, giving substantially improved computational complexity:

\begin{prop} \label{desdecomp}
For each $l \le h$, let $V_i^l$ be the set of vertices at level $l$ in $T_i$. Then
\begin{equation} \label{des_decomp}
K_r(T_1, T_2) = \sum_{i = 1}^h \sum_{v_1 \in V_1^l} \sum_{v_2 \in V_2^l} \langle \des_{v_1} , \des_{v_2} \rangle k_n \left(v_1, v_2\right),
\end{equation}
where $\des_{v_i}$ is an $h$-dimensional vector whose $j^{\textrm{th}}$ coefficient counts the number of descendants of $v_i$ at level $j$ in $T_i$, respectively. The complexity of computing $K_r$ is $\mathcal{O}(h \max_l |V^l|^2  (n + h))$. 

When $k_n$ is a linear kernel $\langle x_1(v_1), x_2(v_2) \rangle$ or $\langle x_1(v_1), x_2(v_2) \rangle  \langle a_1(v_1), a_2(v_2) \rangle$, the kernel $K_r$ can be further decomposed as
\begin{equation}
\begin{array}{c}
K_r(T_1, T_2) = \sum_{l = 1}^h \langle \gamma(T_1, l), \gamma(T_2, l)\rangle,\\
\gamma (T_i, l) = \left\{
\begin{array}{ll}
\sum_{v \in V_i^l} x_i(v) \otimes \des(v), & \textrm{\emph{no} } a_i\\
\sum_{v \in V_i^l} a_i(v) \otimes x_i(v) \otimes \des(v), & \textrm{\emph{with} } a_i
\end{array} \right.
\end{array}
\end{equation}
at total complexity  $\mathcal{O} (|V|  h  n)$ / $\mathcal{O} (|V|  h  n  d)$ (without/with attributes $a_i$). Here, $\otimes$ denotes the Kronecker product.
\end{prop}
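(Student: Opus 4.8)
The plan is to prove the identity in \eqref{des_decomp} by reorganizing the quadruple sum of the expanded form \eqref{node_decomp_1} according to \emph{matched ancestor pairs} rather than endpoint pairs. In \eqref{node_decomp_1} each summand $k_n\!\left(x_1(\pi_1(i)), x_2(\pi_2(i))\right)$ is indexed by an endpoint pair $(v_1,v_2)\in V_1^l\times V_2^l$ at a common level $l$ together with a position $i\le l$ along the two rootpaths; the nodes actually evaluated are the level-$i$ ancestors $u_1=\pi_1(i)$ and $u_2=\pi_2(i)$ of $v_1$ and $v_2$. First I would change the order of summation so that the outer index is the common level $j$ and the matched ancestor pair $(u_1,u_2)\in V_1^j\times V_2^j$, noting that $k_n(x_1(u_1),x_2(u_2))$ depends only on $(u_1,u_2)$ and not on the endpoints.

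The key step is then the combinatorial count of how often a fixed matched pair $(u_1,u_2)$ at level $j$ is produced. It appears exactly once for every endpoint pair $(v_1,v_2)$ lying at a common level $L\ge j$ such that $u_1$ is an ancestor of $v_1$ and $u_2$ an ancestor of $v_2$, i.e. $v_1$ is a descendant of $u_1$ and $v_2$ a descendant of $u_2$ at the same level. The number of such pairs is $\sum_{L\ge j}(\text{descendants of }u_1\text{ at level }L)\cdot(\text{descendants of }u_2\text{ at level }L)$, which is precisely $\langle \des_{u_1},\des_{u_2}\rangle$, since this inner product multiplies the descendant counts level by level. Substituting this multiplicity and renaming $(u_1,u_2,j)$ back to $(v_1,v_2,l)$ yields \eqref{des_decomp}. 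I expect the main obstacle to be bookkeeping of the two level constraints simultaneously -- the matched ancestor pair sits at the summation level while the endpoints range over all deeper common levels -- so that the weight collapses cleanly into the descendant inner product. For the complexity, I would precompute every $\des_v$ by one bottom-up traversal; evaluating \eqref{des_decomp} then ranges over $h$ levels and, per level, over at most $\max_l|V^l|^2$ ancestor pairs, each costing $\mathcal{O}(h)$ for $\langle\des_{v_1},\des_{v_2}\rangle$ and $\mathcal{O}(n)$ for the (linear or Gaussian) node kernel on $\R^n$, giving $\mathcal{O}(h\max_l|V^l|^2(n+h))$.

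For the linear node kernels I would invoke the Kronecker identity $\langle a\otimes b, c\otimes d\rangle=\langle a,c\rangle\langle b,d\rangle$. Writing $\langle\des_{v_1},\des_{v_2}\rangle\langle x_1(v_1),x_2(v_2)\rangle=\langle x_1(v_1)\otimes\des_{v_1}, x_2(v_2)\otimes\des_{v_2}\rangle$, and analogously inserting the extra factor $\langle a_1(v_1),a_2(v_2)\rangle$ as a third tensor slot, bilinearity of the inner product lets me pull both level-$l$ summations inside a single inner product of the aggregated vectors $\gamma(T_i,l)=\sum_{v\in V_i^l}x_i(v)\otimes\des(v)$ (resp.\ with the attribute factor), yielding the stated decomposition.

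Finally, the improved complexity follows from the cost of assembling the $\gamma(T_i,l)$: each node contributes a single Kronecker product of dimension $nh$ (resp.\ $dnh$), and summing over all $|V|$ nodes costs $\mathcal{O}(|V|\,h\,n)$ (resp.\ $\mathcal{O}(|V|\,h\,n\,d)$); the $h$ remaining inner products of the aggregated vectors are of lower order and do not dominate, so the total matches the claim.
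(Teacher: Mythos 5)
Your proposal is correct and follows essentially the same route as the paper's proof: the same reorganization of the sum in~\eqref{node_decomp_1} so that each node-kernel term $k_n(v_1,v_2)$ is weighted by the number of same-level descendant pairs, which is exactly $\langle \des_{v_1},\des_{v_2}\rangle$, followed by the same bottom-up precomputation of descendant vectors and the same Kronecker identity $\langle a,c\rangle\langle b,d\rangle = \langle a\otimes b, c\otimes d\rangle$ for the linear case. Your multiplicity count over ancestor pairs is in fact spelled out more carefully than the paper's one-line justification, and your complexity accounting matches the paper's.
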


\begin{proof}
Eq.~\eqref{des_decomp} follows from the fact that the terms $k_n(v_1, v_2)$ in kernel~\eqref{node_decomp_1} will be counted once for every pair $(w_1, w_2)$ of descendants of $v_1$ and $v_2$, respectively, which are at the same level. The descendant vectors $\des(v_i)$ for all $v_i \in V_i$ can be precomputed using dynamical programming at computational cost $\mathcal{O}(|V|  h)$, since $\des(v) = [1, \oplus_{p(w) = v} \des(w)]$, where $\oplus$ is defined as left aligned addition of vectors\footnote{e.g., $[a, b, c] \oplus [d, e] = [a  + d, b + e, c]$.}. The cost of computing $K_r$ is thus
\[
\mathcal{O}(|V|h + h\max_l |V^l|^2(h+n)) = \mathcal{O}(h \max_l |V^l|^2(n+h)),
\]
where $\mathcal{O}(n)$ is the cost of computing each node kernel $k_n(v_1, v_2)$. 

To prove~\eqref{des_decomp} without attributes $a_i$, let $x_1(v_1), x_2(v_2), \des(v_1)$ and $\des(v_2)$ be column vectors and use the Kronecker trick ($\langle a_i, a_j\rangle \langle b_i, b_j\rangle = \langle b_i \otimes a_i, b_j \otimes a_j \rangle$):
\[\begin{array}{ll}
K(T_1, T_2) & = \sum_{l = 1}^h \sum_{v_1 \in V_1^l} \sum_{v_2 \in V_2^l} \langle \des(v_2), \des(v_1)\rangle \langle x_1(v_1),  x_2(v_2) \rangle\\ 
 & = \sum_{l = 1}^h \langle \sum_{v_2 \in V_2^l} \left( x_2(v_2) \otimes \des(v_2) \right), \sum_{v_1 \in V_1^l} x_1(v_1) \otimes \des(v_1)\rangle\\
 & = \sum_{l = 1}^h \langle \gamma(T_1, l), \gamma(T_2, l) \rangle.
\end{array}
\]
The total complexity is thus $\mathcal{O} (\max_{l, i} |V_i^l|  h  n) + \mathcal{O}(|V|  h) =  \mathcal{O} (|V|  h  n)$. Similar analysis proves the attributed case.
\qed \eop
\end{proof}

\vspace{-0.3cm}

\subsection{Pointcloud kernels} \label{sec:pointcloud}

Anatomical measurements can also be weighed by location using $3D$ position alone in a pointcloud kernel. The pointcloud kernel does not use the tree structure but treats each edges in the tree as a point and compares all points:
\begin{equation} \label{pointcloud_global}
K_{PC}(T_1, T_2) = \sum_{e_1 \in E_1} \sum_{e_2 \in E_2} k_e(e_1, e_2)
\end{equation}
where $k_e$ is a kernel on attributed edges. We use a Gaussian edge kernel (GPC):
\begin{equation}
k_e(e_1, e_2) = \underbrace{e^{- \lambda_1 \left\|x(e_1) - x(e_2) \right\|^2}}_{c_1} \underbrace{e^{- \lambda_2 \left\|a(e_1) - a(e_2) \right\|^2}}_{c_2}.
\end{equation}
The kernel is designed to weight the contribution to the total kernel $K_{PC}$ of the airway wall area percentage kernel value $c_1$ between edges $e_1$ and $e_2$ by the geometric alignment of the same edges, defined by the geometric kernel value $c_2$.

\begin{table}[b]
\centering
\begin{tabular}{|c|c|c|}
\hline
 & Embedded paths & Node-path\\
 & ($m$ landmark points)  & \\
 \hline
All-paths & $\mathcal{O}(|V|^4  m  n)$ & $\mathcal{O}(|V|^2  h  \max_l |V^l|^2(n + h)$\\
\hline
Root-paths & $\mathcal{O} (|V|^2  m  n)$ & $\mathcal{O}(h  \max_l |V^l|^2)  (n + h)$\\
\hline
Attributed all-paths  & N/A & $\mathcal{O}(h  |V|^2  \max_l |V^l|^2)  (n + d + h)$ \\
\hline
Attributed root-paths & N/A & $\mathcal{O}(h  \max_l |V^l|^2)  (n + d + h)$ \\
\hline
Attributed linear root-paths & N/A &  $\mathcal{O} (|V|  h  n  d)$\\
\hline
Pointcloud kernel & N/A & $\mathcal{O}(|V|^2  n  d)$\\
\hline
\end{tabular}
\caption{Computational complexities for the considered kernels. Trees are assumed to be embedded in $\R^n$ and admit additional vector valued measurements in $\R^d$.}
\label{computational_complexities}
\end{table}

\subsection{Baseline kernels}

The kernels presented in this paper are compared to a set of baseline kernels. Standard airway wall area percentage measurements are often compared by using an average measure over parts of the tree or a vector of average measures in chosen generations. We use two baseline airway wall area percentage kernels:
\begin{equation} \label{eq:aaw}
K_{AAW\%}(T_1, T_2) = e^{- \|\hat{a}_1 - \hat{a}_2 \|^2},
\end{equation}
\begin{equation} \label{eq:agaw}
K_{AgAW\%}(T_1, T_2) = e^{- \|(\hat{a}_1)_{(3-6)} - (\hat{a}_2)_{(3-6)} \|^2}
\end{equation}
where $\hat{a}_i$ is the average airway wall area percentage averaged over all centerline points in the tree, and $(\hat{a}_i)_{(3-6)}$ is a $4$-dimensional vector of average airway wall area percentages averaged over all centerline points in generations $3-6$ in tree $T_i$. For these kernels (AAW\%, AgAW\%), linear versions were also computed (i.e. $e^{-\|w_1 - w_2\|^2}$ replaced with $\langle w_1, w_2 \rangle$), but the corresponding classification results are not reported as they were consistently weaker than the Gaussian kernels.

Airway segmentation is likely more difficult in diseased as opposed to healthy subjects, as also observed by~\cite{lauge}. In order to check whether the number of detected branches may be a bias in the studied kernels, we compare our kernels to a linear and a Gaussian branchcount kernel (LBC/GBC) defined by
\begin{equation} \label{eq:lbc}
K_{LBC}(T_1, T_2) = \sharp(V_1) \cdot \sharp(V_2), \quad K_{GBC}(T_1, T_2) = e^{-\| \sharp(V_1) - \sharp(V_2) \|^2}.
\end{equation}
The linear kernel LBC is the most natural, since the Hilbert space associated to a linear kernel on $w \in \R^n$ is just $\R^n$. However, a linear kernel on $1$-dimensional input cannot be normalized, as \eqref{normalize_kernel} produces a kernel matrix with entries $\equiv 1$, and the GBC kernel is used for comparison in Table~\ref{kernelres} to show that the geometric tree kernels are, indeed, measuring something other than branch count.

Several state-of-the-art graph kernels were also used. The random walk kernel~\cite{randomwalk} did not finish computing within reasonable time. The shortest path kernel~\cite{karsten_icdm} was computed with edge number as path length, and the Weisfeiler-Lehman kernel~\cite{weisfeiler_lehman} was computed with node degree as node label. Results are reported in Tables~\ref{comp_time},~\ref{perm_test} and~\ref{kernelres}.

\section{Experiments} \label{experiment}

\vspace{-2mm}

Analysis was performed on airway trees segmented from CT-scans of $1966$ subjects from a national lung cancer screening trial. Triangulated mesh representations of the interior and exterior wall surface were found using an optimal surface based approach \cite{petersen2011}, and centerlines were extracted from the interior surface using front propagation \cite{exact}. As the resulting centerlines are disconnected at bifurcation points, the end points were connected using a shortest path search within an inverted distance map of the interior surface. The airway centerline trees were normalized using person height as an isotropic scaling parameter. Airway wall thickness and airway radius were estimated from the shortest distance from each surface mesh vertex to the centerline. The measurements were grouped and averaged along the centerline by each nearest landmark point. 

Out of the $1966$ participants, $980$ were diagnosed with COPD level 1-3 based on spirometry, and $986$ were symptom free. The minimal/maximal/average number of branches in an airway tree was $29/651/221.5$, respectively.


\begin{table}[t]
\centering
\begin{tabular}{|c|c|c|c|c|c|c|c|c|}
\hline
Kernel & Linear & Gaussian & average & average & Shortest & Weisfeiler \\
 & root-node- & branchcount & AW \% & generation & path & Lehman\\
 & path & & & AW \% & &  ($h = 10$)\\
\hline
Comp.~time & $46$ m $43$ s & $23$ m $3$ s & $0.87$ s & $ 1.61$ s & $42$ m $26$ s & $59$ m $23$ s \\
\hline
\end{tabular}
\caption{Runtime for selected kernels on a larger set of $9710$ airway trees.}
\label{comp_time}
\end{table}

\subsection{Kernel computation and computational time} \label{kernelcomp}

The kernels listed in table~\ref{kernelres} were implemented in Matlab~\footnote{Software: \url{http://image.diku.dk/aasa/software.php}; published software was used for SP, WL~\cite{weisfeiler_lehman}.} and computed on a 2.40GHz Intel Core i7-2760QM CPU with 32 GB RAM. Each kernel matrix was normalized to account for difference in tree size:
\begin{equation} \label{normalize_kernel}
K_{\text{norm}}(T_1, T_2) = \frac{K(T_1, T_2)}{\sqrt{K(T_1, T_1) K(T_2, T_2)}}.
\end{equation}
An exception was made for linear kernels between scalars (LBC and AAW\%), since normalization such kernels results gives matrix coefficients $ \equiv 1$.

Computation times for the different kernels used in the classification experiments in Section~\ref{classification} on $1966$ airway trees are shown in Table~\ref{kernelres}. To demonstrate scalability, some of the kernels were ran on $9710$ airways from a longitudinal study of the $1966$ participants, see Table~\ref{comp_time}. The slower kernels were not included. 

For classification and hypothesis testing, a set of $1966$ airway trees from $1966$ distinct subjects was used ($980$ diagnosed with COPD at scan time).

\subsection{Hypothesis testing: Two-sample test for means} \label{hypothesis}

Let $\mathcal{X}$ denote a set of data objects. Given any positive semidefinite kernel $k \colon \mathcal{X} \times \mathcal{X}$ there exists an implicitly defined feature map $\phi \colon \mathcal{X} \to \mathcal{H}$ into a reproducing kernel Hilbert space $(\mathcal{H}, \langle \cdot \rangle )$ such that $k(x_1, x_2) = \langle \phi(x_1), \phi(x_2) \rangle$ for all $x_1, x_2 \in \mathcal{X}$~\cite{bishop}. Hypothesis tests can be defined in $\mathcal{H}$ to check whether two samples $A, B \subset \mathcal{X}$ are implicitly embedded by $\phi$ into distributions on $\mathcal{H}$ that have, e.g., the same means $\mu_A = \mu_B$~\cite{gretton}. Denote by $\hat{\mu}_A$ and $\hat{\mu}_B$ the sample means of $\phi(A)$ and $\phi(B)$ in $\mathcal{H}$, respectively; we use as a test statistic the distance
\[
T(A, B) = \|\hat{\mu}_A - \hat{\mu}_B\|_\mathcal{H}
\]
between the sample means and check the null hypothesis using a permutation test. Writing $|A| = a$ and  $|B| = b$, we divide $\mathcal{X} = A \cup B$ into $N$ random partitions $A_i, B_i$ of size $|A_i| = a$ and $|B_i| = b$, $i = 1 \ldots N$, compute the test statistic $T_i$ for each partition, and compare it with the statistic $T_0$ obtained for the original partition $\mathcal{X} = A \cup B$. An approximate $p$-value giving the probability of $\phi(A)$ and $\phi(B)$ coming from distributions with identical means $\mu_A = \mu_B$ is now given by $p = \frac{|\{T_i | T_i \ge T_0, i = 1 \ldots N\}| + 1}{N + 1}$. The $T$ statistic can be computed from a kernel matrix since distances in $\mathcal{H}$ can be derived directly from the values of $k(\mathcal{X}, \mathcal{X})$ using the binomial formula:
\[
\begin{array}{ll}
\|\hat{\mu}_A - \hat{\mu}_B\|^2 & = \langle \frac{1}{a} \sum_{i =1}^a \phi(a_i) - \frac{1}{b} \sum_{j=1}^b \phi(b_j), \frac{1}{a} \sum_{i =1}^a \phi(a_i) - \frac{1}{b} \sum_{j=1}^b \phi(b_j) \rangle\\
 & = \frac{1}{a^2} \sum_{i = 1}^a \sum_{m = 1}^a \langle \phi(a_i), \phi(a_m) \rangle - \frac{2}{ab} \sum_{i = 1}^a \sum_{j = 1}^b \langle \phi(a_i), \phi(b_j) \rangle\\
 & + \frac{1}{b^2} \sum_{j = 1}^b \sum_{n = 1}^b  \langle \phi(b_j), \phi(b_jn) \rangle\\
 & = \frac{1}{a^2} \sum_{i = 1}^a \sum_{m = 1}^a k\left(a_i, a_m\right) - \frac{2}{ab} \sum_{i = 1}^a \sum_{j = 1}^b k\left( a_i, b_j \right)\\
 & + \frac{1}{b^2} \sum_{j = 1}^b \sum_{n = 1}^b  k\left(b_j, b_n \right).
\end{array}
\]

Using the test with selected kernels we show that healthy airways and COPD airways do not come from the same distributions (Table~\ref{perm_test}).

\begin{table}[b]
\centering
\begin{tabular}{|c|c|c|c|c|c|c|c|c|c|}
\hline
Kernel & Gaussian & Gaussian & Average  & Generation-\\
 & pointcloud & branchcount & AW-wall \% & average AW-wall \% \\
\hline
$p$-value & $9.99 \cdot 10^{-5}$ & $9.99 \cdot 10^{-5}$ & $9.99 \cdot 10^{-5}$ & $9.99 \cdot 10^{-5}$\\
\hline
\hline
Kernel & Linear & Linear & Shortest & Weisfeiler\\
 & all-node-path & Root-node-path & path & Lehman\\
\hline
$p$-value & $9.99 \cdot 10^{-5}$ & $9.99 \cdot 10^{-5}$ & $9.99 \cdot 10^{-5}$ & $9.99 \cdot 10^{-5}$ \\
\hline
\end{tabular}
\caption{Permutation tests for the means of the COPD patient and healthy subject samples. All permutation tests are made with $10.000$ permutations.}
\label{perm_test}
\end{table}

\subsection{COPD classification experiments} \label{classification}

\begin{table}[t]
\centering
\begin{tabular}{|c|c|c|c|}
\hline
\bf Kernel type & \bf Mean class. & \bf Kernel matrix & \bf Mean class.\\
 & \bf accuracy & \bf computation time & \bf accuracy\\
 & & &  $\bf K + K_{GBC}$\\
\hline
Rootpath, linear~\eqref{rootpath_kernel},~\eqref{geo_pathwise_kernel} &  $62.4 \pm 0.7 \%$ & $9$ h $9$ m $20$ s & $\bf 66.8 \pm 0.4 \%$\\
\hline
Rootpath, Gaussian~\eqref{rootpath_kernel},~\eqref{geo_pathwise_kernel} &  $\bf64.9 \pm 0.4 \%$ & $6$ h $53$ m $21$ s & $\bf 68.2 \pm 0.5 \%$ \\
\hline
All-node-paths, linear~\eqref{allpaths_kernel},~\eqref{nodepath} & $62.0 \pm 0.6 \%$ & $3$ h $7$ s & $63.2 \pm 0.5 \%$\\
\hline
Root-node-path, linear~\eqref{rootpath_kernel},~\eqref{nodepath} & $61.8 \pm 0.7 \%$ & $4$ m $24$ s & $62.9 \pm 0.8 \%$\\
\hline
Root-node-path, Gaussian~\eqref{rootpath_kernel},~\eqref{nodepath} &  $\bf 64.4 \pm 0.8 \%$ & $97$ h $21$ m $45$ s & $\bf 64.9 \pm 0.6 \%$\\
\hline
Root-node-path, linear, $a_i$~\eqref{rootpath_kernel},~\eqref{nodepath} & $58.6 \pm 0.6 \%$ & $19$ m $44$ s & $62.3 \pm 0.8 \%$\\
airway wall area \% attribute & & & \\
\hline
Pointcloud, Gaussian~\eqref{pointcloud_global} & $\bf 64.4 \pm 0.6 \%$ & $18$ h $40$ m $26$ s & $\bf 66.5 \pm 0.6 \%$\\
\hline
Branchcount, linear~\eqref{eq:lbc} & $62.3 \pm 1.0 \%$ & $0.08$ s & N/A \\
\hline
Branchcount, Gaussian & $63.3 \pm 0.4 \%$ & $0.2$ & N/A \\
\hline
Linear kernel on $\%$ & $56.2 \pm 0.6 \%$ & $0.62$ s & $63.3 \pm 0.5 \%$\\
average airway wall area~\eqref{eq:aaw} & & & \\
\hline
Gaussian kernel on average & $60.3 \pm 0.2 \%$ & $0.35$ s & $63.3 \pm 0.5 \%$\\
airway wall area \%, & & & \\
generations $3-6$~\eqref{eq:agaw} & & & \\
\hline
Shortest path~\cite{karsten_icdm} & $62.6 \pm 0.4 \%$ & $20$ m $24$ s & $63.4 \pm 0.4 \%$\\
\hline
Weisfeiler Lehman ($h = 10$)~\cite{weisfeiler_lehman} & $62.1 \pm 0.5 \%$ & $14$ m $40$ s & $62.9 \pm 0.5 \%$\\
\hline
\end{tabular}
\caption{Classification results for COPD on $1966$ individuals, of which $893$ have COPD.}
\vspace{-0.5cm}
\label{kernelres}
\end{table}
Based on the kernel matrices corresponding to the kernels described in Sec.~\ref{kernelcomp} for a set of $1966$ airway trees, classification into COPD/healthy was done using a support vector machine (SVM)~\cite{libsvm}. The SVM slack parameter was trained using cross validation on $90 \%$ of the entire dataset, and tested on the remaining $10\%$. This experiment was repeated $10$ times and the mean accuracies along with their standard deviations are reported in Table~\ref{kernelres}. All kernel matrices were combined with the GBC kernel matrix in order to check whether the kernels were, in fact, detecting something other than branch number.

\section{Discussion}

We have constructed a family of kernels that operate on geometric trees, and seen that they give a fast way to compare large sets of trees. We have applied the kernels to hypothesis testing and classification of COPD based on airway tree structure and geometry, along with state-of-the-art methods. We show that there is a connection between COPD and airway wall area percentage, and the COPD detected based on our weighted airway wall area percentage kernels is stronger than what can be found using average airway wall area percentage measurements over different airway tree generations, which is commonly done~\cite{hasegawa,hackx}.

Efficient kernels for trees with vector-valued node attributes are difficult to design because algorithmically, similarity of vector-valued attributes is more challenging to efficiently quantify than equality of discrete-valued attributes. Nevertheless, some of the defined kernels for vector-attributed trees are fast enough to be applied to large datasets from clinical trials. 

Vector-valued attributes are important from a modeling point of view, as they allow inclusion of geometric information such as branch shape or clinical measurements in the trees. However, there is a tradeoff between computational speed and optimal use of the attributes. The efficient node paths are less robust than the embedded paths in airway segmentations with missing or spurious branches, and we observe a small drop in classification performance in Table~\ref{kernelres}. Rootpath kernels are introduced to improve computational speed. However, they do introduce a bias towards increased weighting of parts of the tree close to the root, which are contained in more root-paths. Gaussian local kernels perform significantly better than linear ones (Table~\ref{kernelres}), which is particularly pronounced in the pointcloud kernel. In convolution kernels based on quantification of substructure similarity rather than isomorphic substructure, all the dissimilar substructures are still contributing to the total value of the kernel, and the Gaussian local kernel downscales the effect of dissimilar substructures much more efficiently than the linear kernel. This is particularly pronounced in kernels that use geometric weighting of airway wall measurement comparison. Unfortunately, however, algorithmic constructions like the Kronecker trick (Prop.~\ref{desdecomp}) do not work for the Gaussian kernels, which do not scale well to larger datasets.

Using hypothesis tests for kernels we show that the healthy and COPD diagnosed airway trees come from different distributions. Using SVM classification we show that COPD can be detected by kernels that depend on tree geometry, tree geometry attributed with airway wall area percentage measurements, or combinatorial airway tree structure. Another efficient detector of COPD is the number of branches detected in the airway segmentation. It is thus important to clarify that our defined kernels are not just sophisticated ways of counting the detected branches. Combining the GBC kernel with the other kernels improves classification performance of the geometrically informed tree and pointcloud kernels, showing that these kernels must necessarily contain independent information, and the connection between COPD and airway shape is more than differences in detected airway branch numbers. In contrast, graph kernels that only use the tree structure are not significantly improved by combination with the branch count kernel. Future work includes efficient ways of computing all-paths kernels with linear node attributes, efficient kernels for trees with errors in them, as well replacing the Gaussian local kernels with more efficient RBF type kernels.

\section*{Acknowledgements}

This research was supported by the Danish Council for Independent Research $|$ Technology and Production Sciences; the Lundbeck Foundation; AstraZeneca; The Danish Council for Strategic Research; Netherlands Organisation for Scientific Research; and the DFG project ''Kernels for Large, Labeled Graphs (LaLa)''.

\bibliographystyle{plain}

\end{document}